\newcommand{\DKL}{\mathrm{D}_\mathrm{KL}}
\title{Optimal neural network approximation of Wasserstein gradient direction via convex optimization}
\author{Yifei Wang \\
  Department of Electrical Engineering\\
  Stanford University\\
  \texttt{wangyf18@stanford.edu} \\
  Peng Chen\\
  Oden Institute for Computational Engineering and Sciences\\
  The University of Texas at Austin\\
  \texttt{peng@oden.utexas.edu}\\
  \and
  Mert Pilanci \\
  Department of Electrical Engineering\\
  Stanford University\\
  \texttt{pilanci@stanford.edu} \\
  \and
  Wuchen Li \\
  Department of Mathematics, \\
  University of South Carolina \\
  \texttt{wuchen@mailbox.sc.edu}
}
\date{}
\begin{document}
\maketitle

\begin{abstract}
    The sampling problem from the posterior distribution is of central importance in Bayesian inference. The Wasserstein gradient descent and its variants, including SVGD, approximates the Wasserstein gradient direction in different function families by minimizing a variational problem. We present a convex semi-definite program (SDP) relaxation of the variational problem with certain regularizations in the families of two-layer ReLU networks with squared ReLU activation. By solving the relaxed convex program, we obtain the optimal approximation of the Wasserstein gradient direction in a broader function family including two-layer squared ReLU activated networks. 
\end{abstract}

\section{Introduction}
Receiving intense interest across the field of inverse problems, Bayesian inference serves as a useful tool to get sharper prediction from the observed data with applications in scientific computing, information science and machine learning. 
It quantifies the uncertainty of a system formulated by combining complicated model and data. The central problem in Bayesian inference is to draw samples from a posterior distribution, which characterizes the optimal parameter distribution of a system given empirical observations. From an optimization and mean-field perspective, with the number of drawn samples goes to infinity, this problem is equivalent to minimize an objective functional in the space of probability density. The objective functional, for instance, Kullback-Leibler (KL) divergence evaluates the similarity of current distribution and the posterior distribution. 

The typical method for solving such optimization problem is the gradient descent method. Different from classical optimization problems in Euclidean space, the gradient direction in the probability space relies on the underlying information metric. In literature, the Wasserstein-$2$ (in short, Wasserstein) metric is of great interest \citep{tgode}. For the Wasserstein metric, the gradient flow of KL divergence is the Fokker-Planck equation of the overdamped Langeven dynamics. The Wasserstein gradient direction in terms of density gives a corresponding update rule of samples. The time-discretization of the overdamped Langeven dynamics gives the classical Langevin MCMC algorithm and proximal Langevin algorithms \citep{lmcaj, plarc}. 

On the other hand, the Wasserstein gradient direction also has a variational formulation, i.e., it corresponds to the optimal solution to a variational problem. We refer this problem as Wasserstein gradient variational problem. By inexactly solving the variational problem in different functional subspace, many efficient sampling algorithms are provided, including Wasserstein gradient descent (WGD) with kernel density estimation (KDE), Stein variational gradient descent (SVGD) \citep{SVGD}. Recent works on projection methods \citep{chen2020projected, wang2021projected} empirically improve the performance of WGD with KDE and SVGD by reducing the dimensionality of the parameter. 

Meanwhile, neural networks exhibit tremendous and astonishing optimization and generalization performance in learning complicated functions from data. According to the universal approximation theorem of neural networks, any arbitrarily complex function can be learned by a two-layer neural network with non-linear activation and sufficient number of neurons. Recently, a line of works \citep{pilanci2020neural, sahiner2020vector, bartan2021neural}, the regularized training problem of two-layer neural networks with ReLU/polynomial activation and a convex loss function can be formulated into a convex program and can be solved globally. A natural question arises: what is the global optimal function which solves the Wasserstein gradient variational problem with regularization in the family of two-layer neural networks? 

In this paper, we present a convex 
relaxation for the regularized Wasserstein gradient variational problem with respect to two-layer neural networks with squared ReLU activation. The relaxed dual problem and the bi-dual problem (the dual of the dual problem) are semi-definite programs (SDP), which can be efficiently optimized by convex optimization solvers including CVX. 

\subsection{Literature view}
In literature, neural networks have been widely applied for Bayesian inference problems from inverse problems and uncertainty quantification. 
For approximating the Wasserstein gradient flow, \citet{di2021neural} finds a transportation function which maximizes the regularized stein discrepancy in the family of vector-output deep neural networks. In \citep{mokrov2021large}, for approximating the PDE of Wasserstein gradient flow, they use input-convex neural networks for the (Jordan, Kinderlehrer and Otto) JKO scheme as the implicit discretization of Wasserstein gradient flow.  \citet{carrillo2021lagrangian} provide a systematic review of time-discretization of Wasserstein gradient flow from the perspective of Lagrangian scheme. In spite of Wasserstein gradient flow, the normalizing flow \citep{rezende2015variational, onken2020ot, kruse2019hint} calculates an invertible mapping between the posterior distribution and a standard normal distribution to provide statistical inference of the posterior distribution.
For high dimensional Bayesian inference problems from uncertainty quantification,  \citet{lan2021scaling} utilizes deep neural network for the emulation phase in the calibration-emulation-sampling scheme.


\section{Background knowledge}
In this section, we give a brief review of Wasserstein gradient descent, the corresponding variational formulation and the convex optimization formulation of two-layer ReLU neural networks.
\subsection{Wasserstein gradient descent}
Let $\Omega\subseteq \mbR^d$ be a region. Consider the following optimization problem in the probability space:
\begin{equation}\label{prob:kl}
\inf_{\rho\in \mcP(\Omega)}\mbE(\rho).
\end{equation}
where $\mcP=\{\rho\in \mcF(\Omega)|\int \rho dx =1, \rho> 0\}$, $\mcF(\Omega)$ is the set of smooth functions defined on $\Omega$. To be specific, we consider the objective functional $\mbE(\rho)$ as the Kullback-Leibler divergence  from $\rho$ to $\pi$, i.e.,  $\DKL(\rho\|\pi)=\int \rho (\log\rho-\log \pi)dx$. The Wasserstein gradient flow for \eqref{prob:kl} follows
\begin{equation}
    \p_t \rho_t  = \nabla\cdot(\nabla \log \rho_t-\nabla \log \pi). 
\end{equation}
This equation is also known as the Fokker-Planck equation. The above dynamics in density corresponds the following updates in terms of samples:
\begin{equation}
    dx_t = (\nabla \log \rho_t(x_t)-\nabla \log \pi(x_t))dt. 
\end{equation}
The above formulation is also known as the Lagrangian structure of the evolution of the Fokker-Planck equation \citep{junge2015fully} or the Lagrangian scheme of the Wasserstein gradient flow \citep{carrillo2021lagrangian}. 
In discrete-time, in the $l$-th iteration, suppose that $\{x_l^n\}$ are samples drawn from $\rho_l$. The update rule of Wasserstein gradient descent (WGD) on the particle system $\{x_l^n\}$ follows
\begin{equation}
    x^{l+1}_n = x^l_n-\alpha_n\nabla \Phi_l(x^l_n),
\end{equation}
where $\Phi_l:\mbR^d\to \mbR$ is a function which approximates $\log\rho_l-\log\pi$. 

\subsection{Variational formulation of WGD}
A natural question is follows: given the particles $\{x_n\}_{n=1}^N$, how to choose a suitable function $\Phi$ to approximate the function $\log \rho -\log \pi$? Consider the following variational problem
\begin{equation}\label{equ:wgd}
\inf_{\Phi\in \mcF(\Omega)} \frac{1}{2}\int \|\nabla \Phi-\nabla \log \rho+\nabla \log \pi\|_2^2 \rho dx.
\end{equation}
The optimal solution is $\Phi=\log \rho -\log \pi$. However, the space of smooth function is too large to tackle with. Let $\mcH$ is a function space. The following proposition gives a formulation of \eqref{equ:wgd} in $\mcH$.
\begin{proposition}
The variational problem \eqref{equ:wgd} with the domain $\mcH$ is equivalent to
\begin{equation}\label{equ:wgd_h}
    \inf_{\Phi\in \mcH} \frac{1}{2} \int \|\nabla \Phi\|_2^2 \rho dx+\int \lra{\nabla \log\pi,\nabla \Phi}\rho dx+\int \Delta \Phi \rho dx.
\end{equation}
\end{proposition}
\begin{proof}
We first note that 
\begin{equation}
\begin{aligned}
    &\frac{1}{2}\int \|\nabla \Phi-\nabla \log \rho+\nabla \log \pi\|_F^2 \rho dx\\
    =&\frac{1}{2} \int \|\nabla \Phi\|_2^2 \rho dx+\int \lra{\nabla \log\pi-\nabla \log \rho ,\nabla \Phi}\rho dx\\
    &+\frac{1}{2}\int \|\nabla \log \rho-\nabla \log \pi\|_2^2\rho dx.
\end{aligned}
\end{equation}
Utilizing the integration by parts, we can compute that
\begin{equation}
\begin{aligned}
    &\int \lra{\nabla \log \rho, \nabla \Phi} \rho dx\\
    =&\int \lra{\frac{\nabla\rho}{\rho},\nabla \Phi } \rho dx\\
    =&\int \lra{\nabla \rho, \nabla \Phi}dx\\
    =&-\int \nabla \Phi \rho dx.
\end{aligned}
\end{equation}
Therefore, the varitional problem \eqref{equ:wgd} is equivalent to
\begin{equation}
    \inf_{\Phi\in \mcF(\Omega)} \frac{1}{2}\int \|\nabla \Phi\|_2^2 \rho dx+\int \lra{\nabla \log\pi,\nabla \Phi}\rho dx+\int \Delta \Phi \rho dx.
\end{equation}
By restricting the domain to $\mcH$, we complete the proof. 
\end{proof}
We can write the objective function in \eqref{equ:wgd_h} as
\begin{equation}\label{obj:var}
    \frac{1}{2}\mbE_{X\sim \rho}[ \|\nabla \Phi(X)\|_2^2+\lra{\nabla \Phi(X),\nabla \log\pi(X)}+\Delta \Phi(X)].
\end{equation}
Therefore, by replacing the density $\rho$ by finite samples $\{x^n\}_{n=1}^N$ following $\rho$, the problem \eqref{equ:wgd_h} in terms of finite samples becomes
\begin{equation}\label{equ:wgd_h_smp}
\inf_{\Phi\in \mcH} \frac{1}{N} \sum_{i=1}^N\pp{\frac{1}{2}\|\nabla \Phi(x_n)\|_2^2 +\Delta \Phi(x_n)+\lra{\nabla \log \pi(x_n),\nabla \Phi(x_n)}}.
\end{equation}
\begin{remark}
If we replace $\nabla \Phi$ for $\Phi\in \mcH$ by a vector field $\Psi\in \mcH$, then, the quantity \eqref{obj:var} is the negative of the regularized stein discrepancy defined in \citep{di2021neural} between $\rho$ and $\pi$ based on $\Psi$. 
\end{remark}

\subsection{Convex optimization formulation of two-layer ReLU networks}
In a line of prior works, the regularized training problem of two-layer ReLU activated neural networks can be formulated as a convex optimization problem. Let $X\in \mbR^{N\times d}$ and $y\in \mbR^N$ be the data matrix and the label vector. The regularized training problem follows
\begin{equation}\label{nn_regress}
    \min_{W,\alpha} \frac{1}{2} \norm{(XW)_+\alpha-y}_2^2+\frac{\beta}{2}(\|W\|_F^2+\|\alpha\|_2^2). 
\end{equation}
Here $W\in \mbR^{d\times m},\alpha\in \mbR^{m}$ are weights in the neural network, $\beta>0$ is the regularization parameter and we let $(z)_+=\max\{z,0\}$. By taking the dual of \eqref{nn_regress} w.r.t. $W_1,w_2$, we derive the dual problem
\begin{equation}\label{nn_regress:dual}
    \max_{\lambda} -\frac{1}{2}\|\lambda-y\|_2^2+\frac{1}{2}\|y\|_2^2,\text{ s.t. } \max_{w:\|w\|_2\leq 1} |\lambda^T(Xw)_+|\leq \beta,
\end{equation}
where $\lambda\in\mbR^N$ is the dual variable. The dual of the dual problem (in short, bi-dual problem) is a convex optimization problem
\begin{equation}\label{nn_regress:bidual}
\begin{aligned}
    \min\; &
    \frac{1}{2}\norm{\sum_{i=1}^{p} D_i X (w_i-w_i')-y}_2^2+\beta\sum_{i=1}^p(\|w_i\|_2+\|w_i'\|_2),\\
    \text{ s.t. }& (2D_i-I)Xw_i\geq 0, (2D_i-I)Xw_i'\geq 0, i\in[p].
\end{aligned}
\end{equation}
in variables $w_1,\dots,w_p, w_1',\dots,w_p'\in\mbR^d$. Here $D_1,\dots,D_p$ are the enumeration of all possible hyperplane arrangements $D=\diag(\mbI(Xw\geq 0))$. Although the primal problem \eqref{nn_regress} is a non-convex optimization, it has the same optimal value with the dual problem \eqref{nn_regress:dual} and the bi-dual problem \eqref{nn_regress:bidual} as long as $m\geq N+1$, see \citep{pilanci2020neural}. 

Based on the convex dual and bi-dual formulations \eqref{nn_regress:dual} and \eqref{nn_regress:bidual} of the non-convex training problem \eqref{nn_regress}, we develop a convex optimization framework for globally optimizing the variational formulation \eqref{equ:wgd_h_smp} of WGD with certain regularization.

\section{Neural network approximation}
Let $\psi$ be an activation function. Consider the case where $\mcH$ is a class of two-layer neural network with the activation function $\psi(x)$:
\begin{equation}
\mcH=\bbbb{\Phi_{\btheta}|\Phi_{\btheta}(x) = \sum_{i=1}^m\alpha_i\psi(w_i^Tx)=\alpha^T\psi(W^Tx),m\in \mbN},
\end{equation}
where $\btheta = \{(w_i,\alpha_i)\}^m_{i=1}$ is the parameter in the neural network with $w_i\in \mbR^d$ and $\alpha_i\in \mbR$ for $i\in[m]$.  For two-layer neural networks, we can compute the gradient and Laplacian of $\Phi\in \mcH$ as follows:
\begin{equation}
\nabla \Phi_{\btheta}(x)  =  \sum_{i=1}^m\alpha_iw_i\psi'(w_i^Tx)= W(\psi'(W^Tx)\circ \alpha),
\end{equation}
\begin{equation}
\Delta \Phi_{\btheta}(x) = \sum_{i=1}^m \alpha_i \|w_i\|_2^2 \psi''(w_i^Tx).
\end{equation}
Thus, we can rewrite the variational problem \eqref{equ:wgd_h_smp} as
\begin{equation}
\begin{aligned}
\min_{\btheta } &\frac{1}{2N} \sum_{n=1}^N\norm{\sum_{i=1}^m\alpha_iw_i\psi'(w_i^Tx_n)}^2+\frac{1}{N} \sum_{n=1}^N\sum_{i=1}^m \alpha_i \|w_i\|_2^2 \psi''(w_i^Tx_n) \\
&+ \frac{1}{N}\sum_{n=1}^N \lra{\sum_{i=1}^m\alpha_iw_i\psi'(w_i^Tx_n), \nabla \log\pi(x_n)}.
\end{aligned}
\end{equation}

We focus on the square activation $\psi(z)=z^2$ and the squared ReLU activation $\psi(z)=(z)_+^2$. 
For these activation functions, we consider the following regularization function
\begin{equation}\label{reg_term}
R(\btheta) = \sum_{i=1}^m(\|w_i\|_2^3+|\alpha_i|^3).
\end{equation}
\begin{remark}
We note that $\nabla \Phi_{\btheta}(x)$ and $\Delta \Phi_{\btheta}(x)$ are all degree-$3$ polynomial of the parameters $\btheta$. Hence, we consider a specific regularization term \eqref{reg_term}. By choosing such regularization term, we can derive the dual problem. 
\end{remark}
By adding the regularization term \eqref{reg_term} to the variational problem, we obtain
\begin{equation}\label{equ:wgd_reg}
\begin{aligned}
\min_{\btheta } &\frac{1}{2N} \sum_{n=1}^N\norm{\sum_{i=1}^m\alpha_iw_i\psi'(w_i^Tx_n)}^2+\frac{1}{N} \sum_{n=1}^N\sum_{i=1}^m \alpha_i \|w_i\|_2^2 \psi''(w_i^Tx_n) \\
&+\frac{1}{N} \sum_{n=1}^N \lra{\sum_{i=1}^m\alpha_iw_i\psi'(w_i^Tx_n), \nabla  \log \pi(x_n)}+\frac{\beta}{2} R(\btheta).
\end{aligned}
\end{equation}
By rescaling the first and second-layer parameters, the regularized variational problem \eqref{equ:wgd_reg} can be formulated as follows.
\begin{proposition}
The regularized variational problem \eqref{equ:wgd_reg} is equivalent to
\begin{equation}\label{equ:wgd_reg_rep}
\begin{aligned}
\min\; & \frac{1}{2}\sum_{n=1}^N\norm{\sum_{i=1}^m\alpha_iw_i\psi'(w_i^Tx_n)}^2+\sum_{n=1}^N\sum_{i=1}^m \alpha_i \|w_i\|_2^2 \psi''(w_i^Tx_n) \\
&+ \sum_{n=1}^N \lra{\sum_{i=1}^m\alpha_iw_i\psi'(w_i^Tx_n), \nabla \log \pi(x_n)}+\tilde \beta\|\alpha\|_1,\\
\text{ s.t. } &\|w_i\|_2\leq 1, i\in[m].
\end{aligned}
\end{equation}
where $\tilde \beta=3\cdot 2^{-5/3}N\beta$. 
\end{proposition}
\begin{proof}
Suppose that $\hat w_i=\beta_i^{-1} w_i$ and $\hat \alpha_i=\beta_i^2\alpha_i$, where $\beta_i>0$ is a scale parameter for $i\in[m]$. Let $\btheta'=\{(\hat w_i,\hat \alpha_i)\}^m_{i=1}$. We note that
\begin{equation}
    \hat \alpha_i\hat w_i\psi'(\hat w_i^Tx_n) = \beta_i\alpha_i w_i\psi'\pp{\beta_i^{-1} w_i^Tx_n}=\alpha_i w_i\psi'( w_i^Tx_n),
\end{equation}
\begin{equation}
    \hat \alpha_i \| \hat w_i\|_2^2\psi''(\hat w_i^Tx_n)=\alpha_i\|w_i\|_2^2\psi''(\hat w_i^Tx_n)=\alpha_i\|w_i\|_2^2\psi''(w_i^Tx_n).
\end{equation}
This implies that $\Phi_{\btheta}(x)=\Phi_{\btheta'}(x)$ and $\nabla\cdot \Phi_{\btheta}(x)=\nabla\cdot\Phi_{\btheta'}(x)$. For the regularization term $R(\btheta)$, we note that
\begin{equation}
\begin{aligned}
\|w_i\|_2^3+\|\alpha_i\|_2^3=&\beta_i^6 |\alpha_i|^3+\alpha^{-3} \|w_i\|_2^3 \\
=&  \alpha_i^6 |\alpha_i|^3+\frac{1}{2}\alpha^{-3} \|w_i\|_2^3+\frac{1}{2}\alpha^{-3} \|w_i\|_2^3\\
=& 3\cdot 2^{-2/3} \|w_i\|_2^2|\alpha_i|.
\end{aligned}
\end{equation}
The optimal scaling parameter is given by $\alpha_i=2^{-1/9}\frac{\|w_i\|_2^{1/3}}{|\alpha_i|_1^{1/3}}$. As the scaling operation does not change $\|w_i\|_2^2|\alpha_i|$, we can simply let $\|w_i\|_2=1$ and the regularization term $\frac{\beta}{2}R(\btheta)$ becomes $\frac{\tilde \beta}{N}\sum_{i=1}^m \|u_i\|_1$. This completes the proof. 
\end{proof}

For simplicity, we write $Y=\bmbm{\nabla\log \pi(x_1)^T\\\vdots\\\nabla \log \pi(x_N)^T }\in \mbR^{N\times d}$. Let $b_n=\sum_{i=1}^m\alpha_iw_i\psi'(x_n^Tw_i)$ and $B=\bmbm{b_1&\dots&b_N}$. Then, we can further reformulate the problem \eqref{equ:wgd_reg_rep} to
\begin{equation}\label{cvx_nn:p}
\begin{aligned}
\min\; &\frac{1}{2}\|B\|_F^2+\sum_{n=1}^N\sum_{i=1}^m \alpha_i \|w_i\|_2^2 \psi''(w_i^Tx_n) +\tr(Y^TB)+\tilde \beta \|\alpha\|_1,\\
\text{ s.t. } & b_n=\sum_{i=1}^m\alpha_iw_i\psi'(x_n^Tw_i), n\in[N],\|w_i\|_2\leq 1, i\in[m].
\end{aligned}
\end{equation}

\begin{proposition}
The dual problem of the regularized variational problem \eqref{cvx_nn:p} is 
\begin{equation}\label{cvx_nn:d}
\begin{aligned}
\max \; &-\frac{1}{2}\|\Lambda+Y\|_F^2,\\
\text{ s.t. }&\max_{w:\|w\|_2\leq 1}\left|\sum_{n=1}^N \|w\|_2^2\psi''(x_n^Tw)-y_n^Tw\psi'(x_n^Tw)\right|\leq \tilde \beta,
\end{aligned}
\end{equation}
in variable $\Lambda\in \mbR^{N\times d}$. 

\end{proposition}
\begin{proof}
Consider the Lagrangian function
\begin{equation}
\begin{aligned}
L(B,W,\alpha, \Lambda) = &\frac{1}{2}\|B\|_F^2+\sum_{n=1}^N\sum_{i=1}^m \alpha_i \|w_i\|_2^2 \psi''(w_i^Tx_n) +\tr(Y^TB)+\frac{\tilde \beta}{2} \|\alpha\|_1\\
&+\sum_{n=1}^N \lambda_n^T\pp{b_n-\sum_{i=1}^m\alpha_iw_i\psi'(x_n^Tw_i)}\\
=&\tilde \beta \|\alpha\|_1+\sum_{i=1}^m \alpha_i \sum_{n=1}^N\pp{\|w_i\|_2^2 \psi''(w_i^Tx_n)-\lambda_n^Tw_i\psi'(x_m^Tw_i)}\\
&+\frac{1}{2}\|B\|_F^2+\tr((Y+\Lambda)^TB).
\end{aligned}
\end{equation}
Thus, we can compute that
\begin{equation}
\begin{aligned}
&\min_{B,W,\alpha}\max_{\Lambda} L(B,W,\alpha, \Lambda)\\
=&\min_{W}\max_{\Lambda} \min_{\alpha,B}L(B,W,\alpha, \Lambda)\\
=&\min_{W}\max_{\Lambda} \tilde \beta \|\alpha\|_1+\sum_{i=1}^m \alpha_i \sum_{n=1}^N\pp{\|w_i\|_2^2 \psi''(w_i^Tx_n)-\lambda_n^Tw_i\psi'(x_m^Tw_i)}+\frac{1}{2}\|B\|_F^2+\tr((Y+\Lambda)^TB)\\
=&\min_{W}\max_{\Lambda} -\frac{1}{2}\|\Lambda+Y\|_F^2+\sum_{i=1}^m \mbI\pp{\max_{w_i:\|w_i\|_2\leq 1}\left|\sum_{n=1}^N\|w_i\|_2^2 \psi''(w_i^Tx_n)-y_n^Tw_i\psi'(x_n^Tw_i)\right|\leq \tilde \beta}.
\end{aligned}
\end{equation}
By exchanging the order of $\min$ and $\max$, we obtain the dual problem.
\end{proof}

\subsection{Analysis of dual constraints and the relaxed dual problem}

Now, we analyze the constraint $\max_{w:\|w\|_2\leq 1}\left|\sum_{n=1}^N \|w\|_2^2\psi''(w^Tx_n)-y_n^Tw\psi'(x_n^Tw)\right|\leq \tilde \beta$ in the dual problem. We focus on the squared ReLU activation $\psi(z)=(z)_+^2$. For simplicity, we take $\psi''(0)=0$. Denote the set of all possible hyper-plane arrangements as
\begin{equation}\label{equ:s}
    \mcS=\{D=\diag(\mbI(Xw)\geq 0)|w\in\mbR^d, w\neq 0\}.
\end{equation}
Let $p=|\mcS|$ and write $\mcS=\{D_1,\dots,D_p\}$. Then, the dual constraint is equivalent to
\begin{equation}
\max_{j\in[p]}\max_{w:\|w\|_2\leq 1, 2(D_j-I)Xw\geq 0} \left|\tr(D_j)\|w\|_2^2-2w^T\Lambda^TD_jXw\right|\leq \tilde \beta,
\end{equation}
or equivalently, 
\begin{align}
    &\min_{w:\|w\|_2\leq 1, 2(D_j-I)Xw\geq 0} \tr(D_j)\|w\|_2^2-2w^T\Lambda^TD_jXw\geq -\frac{\tilde \beta}{2}, \forall j\in[p],\label{dual_sub:min}\\
    &\max_{w:\|w\|_2\leq 1, 2(D_j-I)Xw\geq 0} \tr(D_j)\|w\|_2^2-2w^T\Lambda^TD_jXw\leq \frac{\tilde \beta}{2}, \forall j\in[p].
\end{align}

From a convex optimization perspective, the natural idea to interpret the constraint \eqref{dual_sub:min} is to transform the minimization problem into a maximization problem. For $j\in[p]$, let $A_j(\Lambda)=-\Lambda^TD_jX-X^TD_j\Lambda$ and $B_j=\tr(D_j)I_d$. Then, we can rewrite the minimization problem in \eqref{dual_sub:min} as a trust region problem with inequality constraints:
\begin{equation}\label{trust_region}
\min_w w^T\pp{B_j+A_j(\Lambda)}w, \text{ s.t. } \|w\|_2\leq 1, 2(D_j-I)Xw\geq 0.
\end{equation}

As \eqref{trust_region} is a convex problem, by taking the dual of \eqref{trust_region} w.r.t. $w$, we can transform \eqref{trust_region} into a maximization problem. However, as \eqref{trust_region} is a trust region problem with inequality constraints, the dual problem of \eqref{trust_region} can be very complicated. Fortunately, according to \citep{jeyakumar2014trust}, the optimal value of \eqref{trust_region} is lower-bounded by the optimal value of an SDP formulation. Let $\tilde A_j(\Lambda) = \bmbm{A_j(\Lambda)&0\\0&0},\tilde B_j = \bmbm{B_j&0\\0&0}\in \mbR^{(d+1)\times(d+1)}$ and \begin{equation}
H_0^{(j)}=\bmbm{I_d&0\\0&-1}, \;H^{(j)}_n=\bmbm{0&(1-2(D_j)_{nn})x_n\\(1-2(D_j)_{nn})x_n^T&0}, n=1,\dots N.
\end{equation}
Then, the optimal value of the problem \eqref{trust_region} is bounded by the optimal value of the following SDP
\begin{equation}\label{sdp:dual}
\begin{aligned}
\min_{Z\in\mbS^{d+1}}\;&\tr((\tilde A_j(\Lambda) +\tilde B_j)Z), \\
\text{ s.t. } &\tr(H^{(j)}_nZ)\leq 0, n=0,\dots,N, Z_{d+1,d+1}=1, Z\succeq 0.
\end{aligned}
\end{equation}
from below. Based on the dual problem of \eqref{sdp:dual}, we can derive an SDP as a relaxed dual problem, which bounds the optimal value of the dual problem \eqref{cvx_nn:d} from below.

\begin{proposition}
The optimal value of the dual problem is lower bounded by the optimal value of following SDP:
\begin{equation}\label{cvx_nn:d_relax}
\begin{aligned}
\max\;& -\frac{1}{2}\|\Lambda+Y\|_2^2,\\
\text{ s.t. }
&\tilde A_j(\Lambda)+\tilde B_j+\sum_{n=0}^N r_n^{(j,-)} H^{(j)}_n+\tilde \beta e_{d+1}e_{d+1}^T\succeq 0, j\in[p],\\
&-\tilde A_j(\Lambda)-\tilde B_j+\sum_{n=0}^N r_n^{(j,+)} H^{(j)}_n+\tilde \beta e_{d+1}e_{d+1}^T\succeq 0, j\in[p],\\
&r^{(j,-)}\geq 0, r^{(j,+)}\geq 0, j\in[p].
\end{aligned}
\end{equation}
The variables are $\Lambda\in\mbR^{N\times d}$ and $r^{(j,-)},r^{(j,+)}$ for $j\in[p]$. 
\end{proposition}
\begin{proof}
Let $\mbS^d_{+}=\{S\in \mbS^d|S\succeq 0\}$.
\begin{lemma}\label{lem:SDP_d}
The dual problem of SDP \eqref{sdp:dual} takes the form
\begin{equation}\label{sdp:bidual}
\max -\gamma, \text{ s.t. }S = \tilde A_j(\Lambda)+\tilde B_j+\sum_{n=0}^N r_n H^{(j)}_n+\gamma e_{d+1}e_{d+1}^T,r\geq 0, S\succeq 0,
\end{equation}
in variables $r=\bmbm{r_0\\\vdots\\r_N}\in\mbR^{N+1}$ and $\gamma\in\mbR$.
\end{lemma}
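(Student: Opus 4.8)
The plan is to derive \eqref{sdp:bidual} as the Lagrangian dual of the SDP \eqref{sdp:dual} by a direct computation, using the fact that \eqref{sdp:dual} is a standard-form semidefinite program over the cone $\mathbb{S}^{d+1}_+$. First I would write the Lagrangian of \eqref{sdp:dual} with multipliers $r_n \geq 0$ attached to the inequality constraints $\tr(H^{(j)}_n Z) \leq 0$ for $n = 0, \dots, N$ and a free multiplier $\gamma \in \mathbb{R}$ attached to the equality constraint $Z_{d+1,d+1} = \tr(e_{d+1}e_{d+1}^T Z) = 1$, keeping the conic constraint $Z \succeq 0$ explicit. This gives
\begin{equation}
L(Z, r, \gamma) = \tr\!\big((\tilde A_j(\Lambda) + \tilde B_j) Z\big) + \sum_{n=0}^N r_n \tr(H^{(j)}_n Z) + \gamma\big(\tr(e_{d+1}e_{d+1}^T Z) - 1\big).
\end{equation}

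Next I would collect the terms linear in $Z$ and define $S = \tilde A_j(\Lambda) + \tilde B_j + \sum_{n=0}^N r_n H^{(j)}_n + \gamma\, e_{d+1}e_{d+1}^T$, so that $L(Z, r, \gamma) = \tr(SZ) - \gamma$. Then I would minimize over $Z \succeq 0$: since $\inf_{Z \succeq 0} \tr(SZ)$ equals $0$ when $S \succeq 0$ and $-\infty$ otherwise (this is the self-duality of the PSD cone), the dual objective becomes $-\gamma$ subject to the constraint $S \succeq 0$ together with $r \geq 0$. Packaging the minus sign correctly and recognizing that the original SDP \eqref{sdp:dual} is a minimization so its dual is a maximization, I obtain exactly the form \eqref{sdp:bidual}. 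I would also note briefly that Slater's condition holds for \eqref{sdp:dual} (for instance, the constraint set has a positive-definite interior point once $\tilde\beta$ is in the feasible range discussed earlier), so strong duality is in force and the chain of bounds connecting \eqref{trust_region}, \eqref{sdp:dual}, and \eqref{sdp:bidual} is valid.

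I expect the main obstacle to be purely bookkeeping rather than conceptual: one must be careful that the equality constraint $Z_{d+1,d+1} = 1$ is written as $\tr(e_{d+1}e_{d+1}^T Z) = 1$ so that its multiplier contributes the rank-one term $\gamma\, e_{d+1}e_{d+1}^T$ to $S$ with the correct sign, and that the inequality multipliers $r_n$ enter with the sign dictated by the $\leq 0$ orientation of the constraints (hence $r \geq 0$ and $+\sum r_n H^{(j)}_n$ in $S$). A secondary point worth stating is that $\tilde A_j(\Lambda) + \tilde B_j$, the $H^{(j)}_n$, and $e_{d+1}e_{d+1}^T$ are all symmetric $(d+1)\times(d+1)$ matrices, so $S \in \mathbb{S}^{d+1}$ and the conic duality is applied on the right space. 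Beyond tracking these signs, the derivation is a textbook SDP duality computation and no further difficulty is anticipated.
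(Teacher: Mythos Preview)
Your proposal is correct and follows essentially the same approach as the paper: the paper also forms the Lagrangian $L(Z,r,\gamma)=\tr((\tilde A_j(\Lambda)+\tilde B_j)Z)+\sum_{n=0}^N r_n\tr(H_n^{(j)}Z)+\gamma(\tr(Ze_{d+1}e_{d+1}^T)-1)$ with $r\geq 0$, $\gamma\in\mathbb{R}$, and then minimizes over $Z\succeq 0$ to obtain \eqref{sdp:bidual}. Your write-up is in fact more explicit than the paper's, spelling out the computation $\inf_{Z\succeq 0}\tr(SZ)\in\{0,-\infty\}$ and the sign bookkeeping that the paper leaves implicit.
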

\begin{proof}
Consider the Lagrangian 
\begin{equation}
L(Z,r) =  \tr((\tilde A_j(y)+\tilde B_j) Z)+\sum_{n=0}^N r_n\tr(H_n^{(j)}Z)+\gamma(\tr(Ze_{d+1}e_{d+1}^T)-1),
\end{equation}
where $r\in\mbR^{N+1}_+$ and $\gamma\in \mbR$. By minimizing $L(Z,r,\gamma)$ w.r.t. $Z\in \mbS^{d+1}_{+}$, we derive \eqref{sdp:bidual}.
\end{proof}

According to Lemma \ref{lem:SDP_d}, the constraint on $\Lambda$ such that the optimal value of \eqref{sdp:dual} is bounded by $-\tilde \beta$ from below is equivalent to that the optimal value of \eqref{sdp:bidual} is bounded by $-\tilde \beta$, or equivalently, there exist $r\in \mbR^{N+1}$ and $\gamma$ such that 
\begin{equation}
    -\gamma \geq -\tilde \beta, S = \tilde A_j(\Lambda)+\tilde B_j+\sum_{n=0}^N r_n H^{(j)}_n+\gamma e_{d+1}e_{d+1}^T,r\geq 0, S\succeq 0.
\end{equation}
As $e_{d+1}e_{d+1}^T$ is positive semi-definite, the above condition on $\Lambda$ is also equivalent to that there exist $r\in \mbR^{N+1}$ such that
\begin{equation}
    \tilde A_j(\Lambda)+\tilde B_j+\sum_{n=0}^N r_n H^{(j)}_n+\tilde \beta e_{d+1}e_{d+1}^T\succeq 0, r\geq 0.
\end{equation}
Therefore, the following convex set of $\Lambda$
\begin{equation}
\begin{aligned}
\Big\{\Lambda:\tilde A_j(\Lambda)+\tilde B_j+\sum_{n=0}^N r_n^{(j,-)} H^{(j)}_n+\tilde \beta e_{d+1}e_{d+1}^T\succeq 0,\; r^{(j,-)}\geq 0\Big\}.
\end{aligned}
\end{equation}
is a subset of the set of $\Lambda$ satisfying the dual constraints.
\begin{equation}
\left\{\Lambda:\min_{\|w\|_2\leq 1, (2D_j-I)w\geq 0}w^T\pp{B_j+A_j(\Lambda)}w\geq -\tilde \beta,\right\}
\end{equation}
We note that the following constraint on $\Lambda$
\begin{equation}
    \max_{\|w\|_2\leq 1, (2D_j-I)w\geq 0}w^T\pp{B_j+A_j(\Lambda)}w\leq \tilde \beta
\end{equation}
is equivalent to
\begin{equation}
    \min_{\|w\|_2\leq 1, (2D_j-I)w\geq 0}-w^T\pp{B_j+A_j(\Lambda)}w\geq -\tilde \beta.
\end{equation}
By applying the previous analysis on the above trust region problem, the following convex set of $\Lambda$
\begin{equation}
\begin{aligned}
\Big\{\Lambda:-\tilde A_j(\Lambda)-\tilde B_j+\sum_{n=0}^N r_n^{(j,+)} H^{(j)}_n+\tilde \beta e_{d+1}e_{d+1}^T\succeq 0,\; r^{(j,+)}\geq 0\Big\}.
\end{aligned}
\end{equation}
is a subset of the set of $\Lambda$ satisfying the dual constraints.
\begin{equation}
\left\{\Lambda:\max_{\|w\|_2\leq 1, (2D_j-I)w\geq 0}w^T\pp{B_j+A_j(\Lambda)}w\leq \tilde \beta,\right\}
\end{equation}
Therefore, by replacing the dual constraint $\max_{w:\|w\|_2\leq 1}\left|\sum_{n=1}^N \|w\|_2^2\psi''(w^Tx_n)-y_n^Tw\psi'(x_n^Tw)\right|\leq \tilde \beta$ by
\begin{equation}
\begin{aligned}
&\tilde A_j(\Lambda)+\tilde B_j+\sum_{n=0}^N r_n^{(j,-)} H^{(j)}_n+\tilde \beta e_{d+1}e_{d+1}^T\succeq 0, j\in[p],\\
&-\tilde A_j(\Lambda)-\tilde B_j+\sum_{n=0}^N r_n^{(j,+)} H^{(j)}_n+\tilde \beta e_{d+1}e_{d+1}^T\succeq 0, j\in[p],\\
&r^{(j,-)}\geq 0, r^{(j,+)}\geq 0, j\in[p].
\end{aligned}
\end{equation}
we obtain the relaxed dual problem, whose optimal value bounded the optimal value of the dual problem from below.
\end{proof}

In the following proposition, we derive the bi-dual problem.  It can be viewed as a convex relaxation of the primal problem. 
\begin{proposition}
The bidual problem of \eqref{cvx_nn:d_relax} follows
\begin{equation}\label{cvx_nn:bidual}
\begin{aligned}
\min\;& \frac{1}{2} \|Z+Y\|_F^2-\frac{1}{2}\|Y\|_F^2+\sum_{j=1}^p\tr(\tilde B_j(S^{(j,-)}-S^{(j,+)}))\\
&+\frac{\tilde \beta}{2}\sum_{j=1}^p\tr\pp{(S^{(j,+)}+S^{(j,-)})e_{d+1}e_{d+1}^T },\\
\text{ s.t. }&Z =\sum_{j=1}^p \tilde A_j^*(S^{(j,+)}-S^{(j,-)}),\\
&\tr(S^{(j,-)}H_n^{(j)})\leq 0, \tr(S^{(j,+)}H_n^{(j)})\leq 0, n\in[N],j\in[p],
\end{aligned}
\end{equation}
in variables $Z\in \mbR^{N\times d}$, $S^{(j,+)},S^{(j,-)}\in\mbS^{d+1}_+$ for $j\in[p]$.
\end{proposition}
\begin{proof}
Consider the Lagrangian function
\begin{equation}
\begin{aligned}
L(\Lambda,\mfr, \mfS) =& -\frac{1}{2}\|\Lambda+Y\|_2^2+\sum_{j=1}^p \tr\pp{S^{(j,-)}\pp{\tilde A_j(\Lambda)+\tilde B_j+\sum_{n=0}^N r_n^{(j,-)} H^{(j)}_n+\frac{\tilde \beta}{2}e_{d+1}e_{d+1}^T}}\\
&+\sum_{j=1}^p\tr\pp{S^{(j,+)}\pp{-\tilde A_j(\Lambda)-\tilde B_j+\sum_{n=0}^N r_n^{(j,+)} H^{(j)}_n+\frac{\tilde \beta}{2} e_{d+1}e_{d+1}^T}},
\end{aligned}
\end{equation}
where we write 
\begin{equation}
\begin{aligned}
\mfr=&\pp{r^{(1,-)},\dots,r^{(p,-)},r^{(1,+)},\dots,r^{(p,+)}}\in \pp{\mbR^{N+1}}^{2p},\\
\mfS=&\pp{S^{(1,-)},\dots,S^{(p,-)},S^{(1,+)},\dots,S^{(p,+)}}\in \pp{\mbS^{d+1}_+}^{2p}.
\end{aligned}
\end{equation}
By maximizing w.r.t. $\Lambda$ and $\mfr$, we derive the bi-dual problem \eqref{cvx_nn:bidual}.
\end{proof}

\begin{remark}
We can map any feasible solution to the primal problem \eqref{equ:wgd_reg_rep} to a feasible solution to \eqref{cvx_nn:bidual}. Suppose that $(W,\alpha)$ us a feasible solution to \eqref{equ:wgd_reg_rep}. TBA. 
\end{remark}

\subsection{Practical implementation}
Although the number $p$ of hyperplane arrangements is upper bounded by $2d(Nd/e)^d$, it is computationally costly to enumerate all possible $p$ matrices $D_1,\dots,D_p$. In practice, we first randomly sample $M$ i.i.d. random vectors $u_1,\dots,u_M\sim \mcN(0,I_d)$ and generate a subset $\hat \mcS$ of $\mcS$ as follows:
\begin{equation}
    \hat \mcS=\{\diag(\mbI(Xu_j\geq 0)|j\in[M]\}.
\end{equation}
Denote $\hat p=\hat \mcS$. Then, we consider the following randomized projection of the relaxed dual problem
\begin{equation}\label{cvx_nn:d_relax_rng}
\begin{aligned}
\max\;& -\frac{1}{2}\|\Lambda+Y\|_2^2,\\
\text{ s.t. }
&\hat A_j(\Lambda)+\hat B_j+\sum_{n=0}^N r_n^{(j,-)} \hat H^{(j)}_n+\tilde \beta e_{d+1}e_{d+1}^T\succeq 0, j\in[\hat p],\\
&-\hat A_j(\Lambda)-\hat B_j+\sum_{n=0}^N r_n^{(j,+)} \hat H^{(j)}_n+\tilde \beta e_{d+1}e_{d+1}^T\succeq 0, j\in[\hat p],\\
&r^{(j,-)}\geq 0, r^{(j,+)}\geq 0, j\in[\hat p].
\end{aligned}
\end{equation}
Here we write 
\begin{equation}
    \tilde A_j(\Lambda) = \bmbm{-\Lambda^T\hat D_jX-X^T\hat D_j\Lambda&0\\0&0},\tilde B_j = \bmbm{\tr(\hat D_j)I&0\\0&0}\in \mbR^{(d+1)\times(d+1)},
\end{equation}
and
\begin{equation}
\hat H_0^{(j)}=\bmbm{I_d&0\\0&-1}, \;\hat H^{(j)}_n=\bmbm{0&(1-2(\hat D_j)_{nn})x_n\\(1-2(\hat D_j)_{nn})x_n^T&0}, n=1,\dots N.
\end{equation}

The algorithm based on the relaxed dual problem is summarized in Algorithm \ref{alg:cnwd}.
\begin{algorithm}[!htp]
\caption{Convex neural Wassertein descent}
\label{alg:cnwd}
\begin{algorithmic}[1]
\REQUIRE initial positions $\{X_0^i\}_{i=1}^N$, step size $\tau$, number of iteration $L$, initial regularization parameter $\beta_0$, $\gamma\in(0,1)$.
\FOR{$l=0,1,\dots L$}
\STATE Formulate $X$ and $Y$ based on $\{X_l^i\}_{i=1}^N$ and $\{\nabla \log\pi(X_l^i)\}_{i=1}^N$.
\STATE Solve $\Lambda$ from the relaxed dual problem with $\beta=\beta_l$. 
\IF {The relaxed dual problem with $\beta=\beta_l$ is infeasible}
\STATE Set $X_{l+1}^i=X_l^i$ for $i\in[N]$ and set $\beta_{l+1}=\gamma^{-1}\beta_l$.
\ELSE
\STATE Update $X_{l+1}^i=X_l^i-\tau(\Lambda_i+Y_i)$ for $i\in[N]$ and set $\beta_{l+1}=\gamma\beta_l$.
\ENDIF
\ENDFOR
\end{algorithmic}
\end{algorithm}

\begin{figure}[ht]
\centering
{
 \begin{tikzcd}[row sep=normal, column sep=large]
{\fbox{\Centerstack[c]{Regularized training problem \eqref{equ:wgd_reg}}}}
\arrow[d,"{\scriptsize\setlength{\fboxsep}{0pt}
  \setlength{\fboxrule}{0pt}\fbox{\Centerstack[c]{rescale the parameters}}}"']
  \arrow[d,"{\scriptsize\setlength{\fboxsep}{0pt}
  \setlength{\fboxrule}{0pt}\fbox{\Centerstack[c]{OBJ: equal to}}}"]
\\
{\fbox{\Centerstack[c]{Rescaled regularized training problem \eqref{equ:wgd_reg_rep}}}}
\arrow[d,"{\scriptsize\setlength{\fboxsep}{0pt}
  \setlength{\fboxrule}{0pt}\fbox{\Centerstack[c]{take the dual}}}"']
  \arrow[d,"{\scriptsize\setlength{\fboxsep}{0pt}
  \setlength{\fboxrule}{0pt}\fbox{\Centerstack[c]{OBJ: bound from below}}}"]
\\
{\fbox{\Centerstack[c]{Dual problem \eqref{cvx_nn:d}}}}
\arrow[d,"{\scriptsize\setlength{\fboxsep}{0pt}
  \setlength{\fboxrule}{0pt}\fbox{\Centerstack[c]{dual constraints tightened by SDP}}}"']
    \arrow[d,"{\scriptsize\setlength{\fboxsep}{0pt}
  \setlength{\fboxrule}{0pt}\fbox{\Centerstack[c]{OBJ: bound from below}}}"]
 \\
{\fbox{\Centerstack[c]{Relaxed dual problem \eqref{cvx_nn:d_relax}}}} 
\arrow[d,"{\scriptsize\setlength{\fboxsep}{0pt}
  \setlength{\fboxrule}{0pt}\fbox{\Centerstack[c]{Randomized choice of constraints}}}"'] 
    \arrow[d,"{\scriptsize\setlength{\fboxsep}{0pt}
  \setlength{\fboxrule}{0pt}\fbox{\Centerstack[c]{OBJ: bound from above}}}"]
 \\
 {{\fbox{\Centerstack[c]{Randomized relaxed dual problem \eqref{cvx_nn:d_relax_rng}}}}}
  \end{tikzcd}
\caption{Relation among different formulations and relaxations of the regularized training problem.}\label{fig:rel}
}
\end{figure}

\subsection{Choice of regularization parameter}
One question is that whether the domain of \eqref{cvx_nn:d_relax} is non-empty. Consider the following SDP
\begin{equation}
\begin{aligned}
\min\;& \tilde \beta,\\
\text{ s.t. }
&\tilde A_j(\Lambda)+\tilde B_j+\sum_{n=0}^N r_n^{(j,-)} H^{(j)}_n+\tilde \beta e_{d+1}e_{d+1}^T\succeq 0, j\in[p],\\
&-\tilde A_j(\Lambda)-\tilde B_j+\sum_{n=0}^N r_n^{(j,+)} H^{(j)}_n+\tilde \beta e_{d+1}e_{d+1}^T\succeq 0, j\in[p],\\
&r^{(j,-)}\geq 0, r^{(j,+)}\geq 0, j\in[p].
\end{aligned}
\end{equation}
Let $\tilde \beta_\mathrm{min}$ be the optimal value of the above problem. Then, only for $\tilde \beta \geq \tilde \beta_\mathrm{min}$, the relaxed dual problem \eqref{cvx_nn:d_relax} is feasible, i.e., there exists $\Lambda\in \mbR^{N\times d}$ satisying the constraints in \eqref{cvx_nn:d_relax}. We also note that 
\begin{equation}
\tilde \beta_\mathrm{min}=\min_\Lambda\max_{\|w\|_2\leq 1}  \left|\sum_{n=1}^N(\|w\|_2^2\mbI(w^Tx_n>0)-2y_n^Tw(x_n^Tw)_+)\right|,
\end{equation}
which is a constant only depends on the data $X$. On the other hand, consider the following SDP
\begin{equation}
\begin{aligned}
\min\;& \tilde \beta,\\
\text{ s.t. }
&\tilde A_j(Y)+\tilde B_j+\sum_{n=0}^N r_n^{(j,-)} H^{(j)}_n+\tilde \beta e_{d+1}e_{d+1}^T\succeq 0, j\in[p],\\
&-\tilde A_j(Y)-\tilde B_j+\sum_{n=0}^N r_n^{(j,+)} H^{(j)}_n+\tilde \beta e_{d+1}e_{d+1}^T\succeq 0, j\in[p],\\
&r^{(j,-)}\geq 0, r^{(j,+)}\geq 0, j\in[p].
\end{aligned}
\end{equation}
Let $\tilde \beta_\mathrm{max}$ be the optimal value of the above problem. For $\tilde \beta \geq \tilde \beta_\mathrm{max}$, the optimal value of the relaxed dual problem \eqref{cvx_nn:d_relax} is $0$. In short, only when $\tilde \beta \in [\tilde \beta_\mathrm{min}, \tilde \beta_\mathrm{max}]$, the variational problem \eqref{cvx_nn:d_relax} is meaningful and solving this problem will yield a possibly good approximation of the Wasserstein gradient direction. 


\section{Numerical experiments}
In this section, we present numerical results for comparing WGD with KDE, WGD approximated by neural networks and WGD approximated using convex optimization formulation of neural networks. 

\subsection{Toy examples}
We test with a bimodal 2d posterior distribution. We evaluate the performance of different sampler by calculating the maximum mean discrepancy (MMD) between the samples in each iteration and samples from the posterior distribution generated by the Langevin MCMC. 

In practice, we note that with a fixed regularization parameter $\beta$, the movement of the particles will quickly go to $0$. This implies that we shall gradually decrease $\beta$ along the path. Otherwise, we will make no progress for further iterations. This is analogous to quenching in some sense.

\begin{figure}[H]
\centering
\begin{minipage}[t]{0.95\textwidth}
\centering
\includegraphics[width=\linewidth]{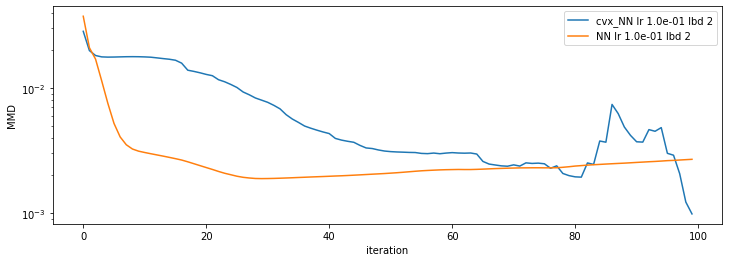}
\end{minipage}
\caption{Toy two-mode distribution. MMD as a function of iteration number.}
\end{figure}

\begin{figure}[H]
\centering
\begin{minipage}[t]{0.95\textwidth}
\centering
\includegraphics[width=\linewidth]{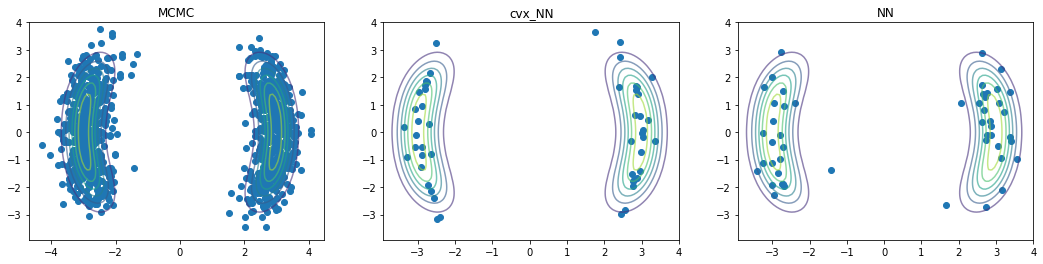}
\end{minipage}
\caption{Toy two-mode distribution. Sample at the final iteration.}
\end{figure}

Then, we test on a 2d double banana example. 

\begin{figure}[H]
\centering
\begin{minipage}[t]{0.95\textwidth}
\centering
\includegraphics[width=\linewidth]{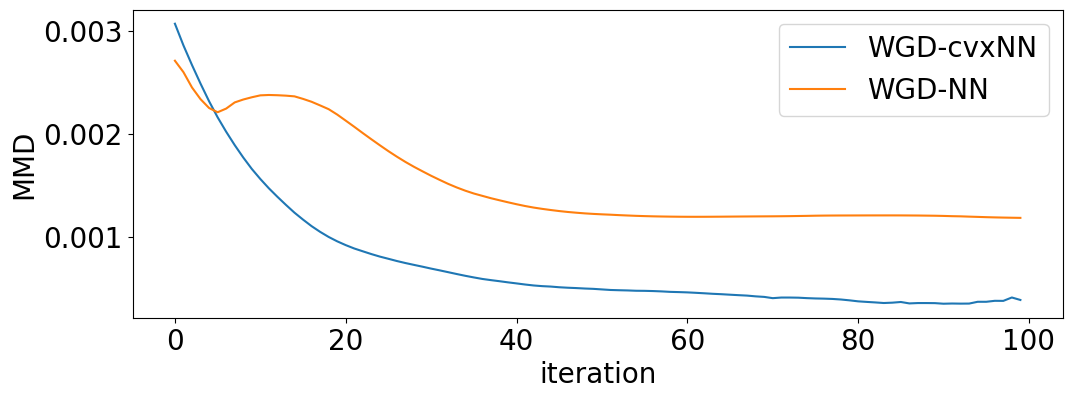}
\end{minipage}
\caption{Double banana. MMD as a function of iteration number.}
\end{figure}

\begin{figure}[H]
\centering
\begin{minipage}[t]{0.95\textwidth}
\centering
\includegraphics[width=\linewidth]{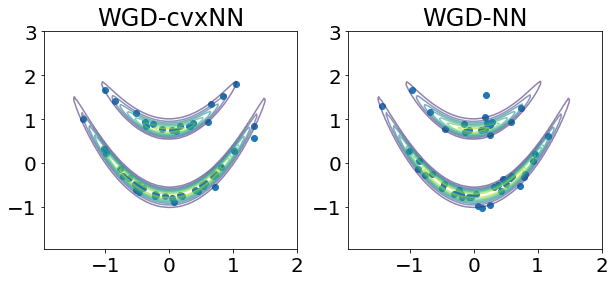}
\end{minipage}
\caption{Double banana. Sample at the final iteration.}
\end{figure}



\subsection{Linear PDE}
We then test on a linear PDE example as a model for contaminant diffusion in environmental engineering
$$
- \kappa \Delta u + \nu u = x \quad \text{ in } D,
$$
where $x$ is an infinite-dimensional contaminant source field parameter in domain $D$, $u$ is the contaminant concentration which we can observe at some locations, $\kappa$ and $\nu$ are diffusion and reaction coefficients. For simplicity, we set $\kappa, \nu = 1$, $D = (0, 1)$, $u(0)=u(1)=0$, and consider 15 pointwise observations of $u$ with $1\%$ noise, equidistantly distributed in $D$. We set $x_0 = 0$ and use a covariance given by differential operator $C=(-\delta\Delta + \gamma I)^{-\alpha}$ with $\delta, \gamma, \alpha > 0$ representing the correlation length and variance, which is commonly used in geoscience \citep{LindgrenRueLindstroem11}. We set $\delta = 0.1, \gamma = 1, \alpha = 1$. We solve this forward model by a finite element method with piece-wise elements on a uniform mesh of size $2^k$ where $k=4$, leading to dimension $17$ for the discrete $x$. In this linear setting, the mean and covariance of the Gaussian posterior can be explicitly computed and used as reference.

\begin{figure}[H]
\centering
\begin{minipage}[t]{0.45\textwidth}
\centering
\includegraphics[width=\linewidth]{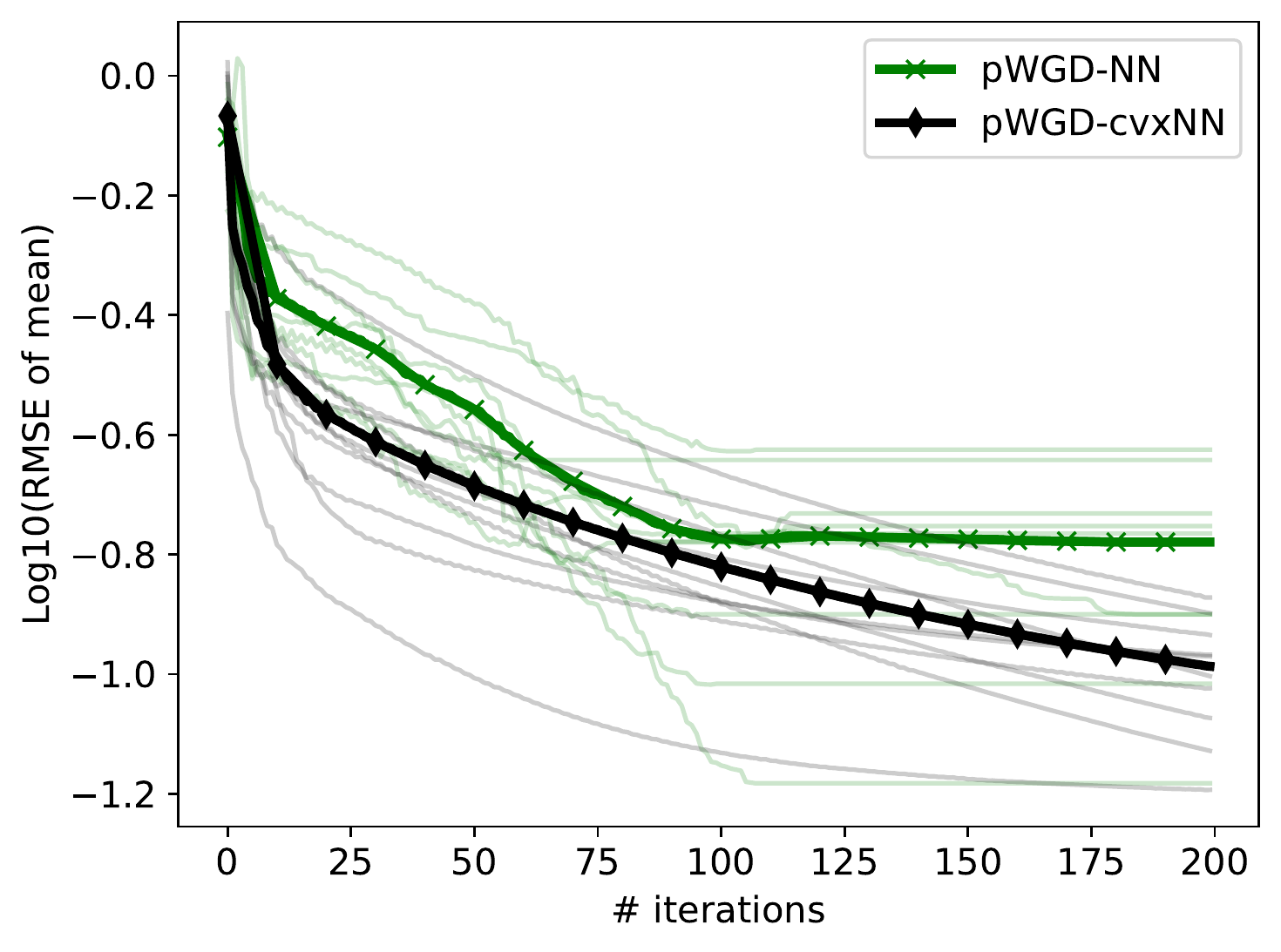}
\end{minipage}
\begin{minipage}[t]{0.45\textwidth}
\centering
\includegraphics[width=\linewidth]{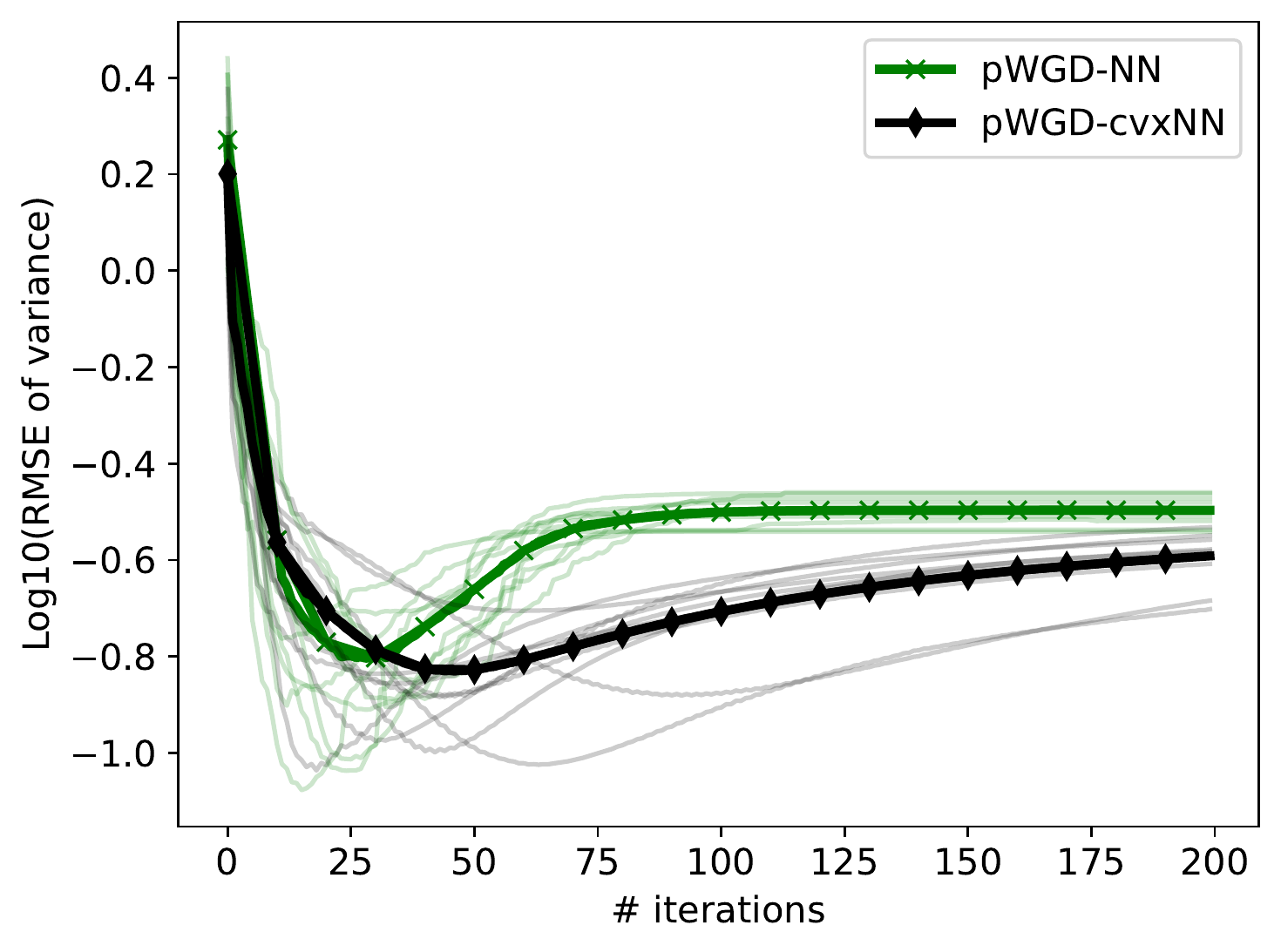}
\end{minipage}
\caption{Linear PDE example.}\label{fig:linear_PDE}
\end{figure}

\appendix
\section{Proofs}

\bibliography{WIG}
\bibliographystyle{apalike}

\end{document}